%% file: main.tex
\documentclass{article}

\usepackage{arxiv}
\usepackage{amsmath}
\usepackage{array}
\usepackage{multirow}
\usepackage[utf8]{inputenc}
\usepackage[T1]{fontenc}
\usepackage{url}
\usepackage{float}
\usepackage{booktabs}
\usepackage{amsfonts}
\usepackage{siunitx}
\usepackage{comment}
\usepackage{lipsum}
\usepackage{graphicx}
\usepackage{subfigure}
\usepackage{natbib}
\usepackage{doi}
\usepackage{macros}
\usepackage{algorithm}
\usepackage{algpseudocode}
\usepackage{amsmath, amssymb}

\DeclareMathOperator{\reduce}{reduce}
\DeclareMathOperator{\supp}{supp}

\newtheorem{proposition}{Proposition}[section]

\title{ROTE: Benchmarking Neural Memorization on Complexity-Controlled Symbolic Sequences}

\author{ \href{https://orcid.org/0000-0003-1778-393X}{\includegraphics[scale=0.06]{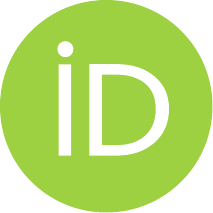}\hspace{1mm}Xinye Chen}\thanks{Authors in alphabetical order.} \\
	Sorbonne Universit\'{e}, CNRS, LIP6 \\
	Paris, France \\
	\texttt{xinye.chen@lip6.fr} \\
	\And
	\href{https://orcid.org/0000-0003-1494-4478}{\includegraphics[scale=0.06]{figures/orcid.pdf}\hspace{1mm}Stefan G\"uttel} \\
	The University of Manchester \\
	Manchester, United Kingdom\\
	\texttt{stefan.guettel@manchester.ac.uk} \\
	\And
	\href{https://orcid.org/0009-0002-4319-2324}{\includegraphics[scale=0.06]{figures/orcid.pdf}\hspace{1mm}Mohammad Mozaffari} \\
	University of Toronto \\
	Toronto, Canada\\
	\texttt{mmozaffari@cs.toronto.edu} \\
}

\date{}

\renewcommand{\shorttitle}{Benchmarking Neural Memorization on Complexity-Controlled Symbolic Sequences}

\hypersetup{
pdftitle={Benchmarking Neural Memorization},
pdfsubject={cs.LG},
pdfauthor={Xinye Chen, Stefan G\"uttel, Mohammad Mozaffari},
pdfkeywords={Benchmarking, Transformer, RNNs, Symbolic sequences, Memorizing ability},
}

\begin{document}
\maketitle

\begin{abstract}
We introduce ROTE (RollOut Testing of Exact memorization), a benchmarking protocol for evaluating symbolic memorization of neural architectures.
We study memorization and the extension of symbolic rules in neural sequence models by using sequences whose complexity is regulated by Lempel--Ziv--Welch (LZW) compression.  Under ROTE, each architecture is trained as the same finite-context conditional predictor and is evaluated using teacher-forced one-step prediction as well as closed-loop rollout on the withheld symbols. Following a shared prediction-and-rollout evaluation routine, the benchmark evaluates gated recurrent, minimal recurrent, attention-based, and hybrid recurrent-attention models with their native computational characteristics preserved. Beyond standard predictive metrics, the benchmark reports normalized string distances, training time, memory usage, and parameter count across an LZW-complexity sweep.  The study establishes a connection between the complexity of algorithmic sequences and the memorization capacity of neural architectures, revealing the trade-offs involving memorization quality, rollout stability, and computational expense. Our software and reproducible experimental code can be obtained from \url{https://github.com/nla-group/rote}.
\end{abstract}

\keywords{Benchmarking  \and Transformer \and RNNs \and Symbolic sequences \and Memorizing ability}

\section{Introduction}
RNNs and Transformers are important model architectures for sequence modeling, with applications across language, vision, audio, biology, weather, machine translation, and speech recognition \citep{kitaev-klein-2018-constituency,howard-ruder-2018-universal,Devlin2019,10.1145/3308558.3313707,doi:10.1073/pnas.2016239118,osti_2564318,zhou2021informer,valanarasu2022transweather,Vaswani2017,JMLR:v25:23-0870,dong2018speech,gulati2020conformer,prabhavalkar2024survey}.  Transformers have shown strong empirical performance through content-dependent attention, but standard self-attention has quadratic complexity in sequence length.  This cost has motivated efficient Transformer variants, including sparse, low-rank, reversible, and linear-attention mechanisms \citep{liu2018generating,kitaev2020reformer,wang2020linformer,DBLP:journals/corr/abs-2002-11296,katharopoulos2020transformers}.  The resulting design space makes it important to evaluate not only prediction accuracy, but also rule discovery, memorization, rollout stability, and computational cost under controlled sequence complexity.

Although Transformer models are considered to be a fundamentally distinct approach compared to recurrent neural networks, \cite{katharopoulos2020transformers} demonstrates that any Transformer layer employing causal masking can be treated as an RNN model. Transformers have been demonstrated to outperform  LSTM \citep{Hochreiter1997} and GRU \citep{Cho2014} architectures on many real-world tasks. It is natural to ask whether or not the RNN cells can be discarded entirely with the development of self-attention.  Finding rules consists of identifying the underlying patterns or relationships in the data, such as the syntactic structures in language, the temporal dependencies in time-series data, or the logical constraints in symbolic reasoning tasks.   The ability of a model to identify and generalize rules from sequential data is a  fundamental learning capability, particularly in tasks requiring reasoning, pattern recognition, or structured prediction. Measuring this ability is therefore important for guiding architecture design.

Existing benchmarking methods (e.g., \citep{tay2021long}) lack control over sequence complexity or structure and place minimal focus on computational efficiency or sequence similarity metrics. They often focus on natural language datasets (e.g., WikiText, Penn Treebank) and simple synthetic tasks (e.g., copy tasks, arithmetic sequences) to evaluate perplexity and generation quality for RNNs and Transformers by simulating various tasks like machine translation, speech recognition, and sentiment analysis. However, such benchmarks often have lower complexity or rely on statistical patterns that favor Transformer's attention mechanisms over RNN's sequential modeling, which often results in biased evaluation.
Furthermore, according to the scaling law \citep{kaplan2020scaling, hoffmann2022training}, optimal performance within a fixed computational budget is achieved by balancing model size with dataset scale, rather than solely increasing the number of parameters. However, these scaling laws overlook inference budgets, critical for large-scale deployment \citep{touvron2023llama}, and task complexity, such as sequence compressibility.

We propose a symbolic-sequence benchmark called \textbf{ROTE (\underline{R}oll\underline{O}ut \underline{T}esting of \underline{E}xact memorization)}, in which the difficulty of the seed pattern is controlled by Lempel--Ziv--Welch (LZW) compression before the sequence is periodically extended.  Each model receives the same length-$w$ context and is trained to predict the next symbol, then tested both on held-out one-step windows and on a closed-loop rollout generated from its own predictions.  This setup distinguishes two sources of difficulty that are often conflated: how compressible the seed is and whether the next symbol is identifiable from the chosen context length.  In the experiments, we report DL and JW rollout distances together with test loss, accuracy, parameter count, training time, time per epoch, and peak memory, so that failures can be read as errors in memorization, rollout stability, or compute use rather than as a single aggregate score.

The remainder of the paper is organized as follows: Section~\ref{sec:related} discusses the related work on RNNs and Transformer models, examining their learning capabilities and benchmarking performance, and highlighting the respective strengths and weaknesses of these architectures; Section~\ref{sec:benchmarking} illustrates our benchmark methods and the neural architectures we aim to simulate; Section~\ref{sec:simulation} presents empirical results and defines the outcomes; Section~\ref{sec:limitations} discusses limitations; Section~\ref{sec:conclusion} concludes and discusses future work.

\section{Related work}\label{sec:related}

The focus in natural language processing has shifted from using RNNs to using Transformers. RNNs, which include types such as LSTM and GRU, handle sequential data by keeping a hidden state, but they have difficulty with long-range dependencies because of their memory limitations and limited scalability \citep{Vaswani2017}. Transformers \citep{Vaswani2017} make use of multi-head attention mechanisms in order to process sequences in parallel and have achieved state-of-the-art results in tasks such as machine translation and text classification.

RNNs handle inputs in a sequential fashion by updating their hidden state in order to capture temporal dependencies, a feature that makes them suitable for tasks with a recurrent structure. Nevertheless, due to their sequential nature and the problem of vanishing gradients, they are limited in their ability to effectively model long sequences. Although LSTMs and GRUs mitigate this through gating mechanisms, they remain less scalable.  \cite{9679033} investigates how well RNNs can learn and represent languages generated by context-free grammars (CFGs), such as balanced parentheses and arithmetic expressions. It aims to bridge neural network sequence modeling with formal language theory, and it demonstrates that in principle RNNs can capture the hierarchical structures typical of CFGs. \cite{ROberto19} examines the architecture of RNNs by looking at the complexity of the string sequences that they are able to memorize; it finds that the learning rate and the number of units per layer are among the most important hyper-parameters to tune, and that GRUs perform better than LSTM networks on sequences of low complexity whereas LSTMs perform better on those of high complexity.

The Transformer model, as proposed in \cite{Vaswani2017}, makes use of self-attention in order to create representations that are dependent on the content within a given context window. Because standard self-attention scales quadratically with sequence length, many efficient attention mechanisms \citep{kitaev2020reformer,wang2020linformer,katharopoulos2020transformers,choromanski2021performer,tay2021long} have been proposed to reduce the cost of long-context modeling. However,  as shown in the Long Range Arena (LRA) benchmark \citep{tay2021long}, efficient attention models involve nontrivial trade-offs between quality and memory, rather than featuring a single design that is universally superior. More recent models have further blurred the line between attention and recurrence: linear attention reinterprets causal attention as a recurrent computation \citep{katharopoulos2020transformers}, Performer approximates softmax attention using random features \citep{choromanski2021performer}, RWKV combines recurrent time mixing with channel mixing in the style of Transformers \citep{peng2023rwkv}, and minGRU/minLSTM examine whether a simplified form of recurrence can achieve a large part of the effectiveness of gated RNNs \citep{feng2024rnnsneeded}.

\section{Models and benchmarking}\label{sec:benchmarking}

We formulate symbolic-sequence memorization as a controlled next-symbol prediction problem followed by a free-running rollout test.  The benchmark deliberately separates the source of sequence complexity, determined by a finite seed string, from the supervised learning problem, determined by finite-context conditional prediction.  This separation is important when comparing recurrent models with attention-based models: all architectures are evaluated through the same conditional distribution over the next symbol, while their internal state update, attention pattern, and pooling mechanism may differ.

\subsection{Controlled symbolic sequences}

Let
\begin{equation*}
\Sigma=\{\mathrm{A},\ldots,\mathrm{Z},\mathrm{a},\ldots,\mathrm{z}\},\qquad |\Sigma|=52,
\end{equation*}
and let $\Sigma_n=\{\sigma_1,\ldots,\sigma_n\}\subseteq\Sigma$ denote the active alphabet, $1\le n\le 52$.  A symbolic seed is a finite word $u=u_1\cdots u_m\in\Sigma_n^*$ with support
\begin{equation*}
\supp(u)=\{\sigma\in\Sigma_n:\exists i\in\{1,\ldots,m\},\ u_i=\sigma\}.
\end{equation*}
The generator is constrained to return seeds satisfying $|\supp(u)|=n$ and a target compression level $c$.  Hence alphabet size and algorithmic compressibility are controlled independently: $n$ fixes the output classification dimension, whereas $c$ controls the description length of the symbolic pattern to be memorized.

\subsubsection{LZW parsing, periodic reduction, and seed generation}

We use LZW compression as a deterministic finite-memory parsing procedure.  For a word $v\in\Sigma^*$, initialize a dictionary
\begin{equation*}
D_0:\Sigma\rightarrow\{0,\ldots,51\},
\end{equation*}
set the working phrase $p\leftarrow\epsilon$, and scan $v$ from left to right.  Whenever $pa$ is already in the dictionary, the parser extends the phrase, $p\leftarrow pa$.  Otherwise it emits the integer code $D(p)$, inserts the new phrase $pa$ with the next unused integer, and restarts from $p\leftarrow a$.  The final non-empty phrase is emitted at termination.  This defines a computable transduction
\begin{equation*}
\mathcal{C}_{\mathrm{LZW}}:\Sigma^*\rightarrow\mathbb{N}^*,
\qquad
v\mapsto (z_1,\ldots,z_\ell),
\end{equation*}
and we write
\begin{equation*}
\mathrm{LZW}(v)=|\mathcal{C}_{\mathrm{LZW}}(v)|
\end{equation*}
for the number of emitted codes.  The score is not used as an absolute Kolmogorov complexity; it is used as a reproducible, implementation-level proxy for symbolic description length.

Before compression, we quotient out exact repetition.  For $u\in\Sigma^m$, define its shortest exact period by
\begin{equation*}
\pi(u)=\min\left\{d\in\{1,\ldots,m\}: d\mid m\ \text{and}\ u_i=u_{i+d}\ \text{for all}\ 1\le i\le m-d\right\},
\end{equation*}
with $\pi(u)=m$ when no shorter divisor exists.  The reduction operator is
\begin{equation*}
\reduce(u)=u_{1:\pi(u)},
\end{equation*}
equivalently, $\reduce(u)=r$ if $u=r^q$ for some $q\ge2$ and $r$ has minimal length; otherwise $\reduce(u)=u$.  The benchmark complexity is therefore
\begin{equation}
\kappa(u)=\mathrm{LZW}(\reduce(u)).
\label{eq:lzw_complexity}
\end{equation}
This convention prevents a seed from appearing complex merely because it repeats the same block many times.  Computationally, $\reduce$ is implemented by enumerating candidate divisor lengths $d\le m/2$ and checking whether tiling the prefix $u_{1:d}$ reconstructs $u$ exactly.

The seed generator can be viewed as a randomized hitting-time procedure.  Given $(n,c)$, it first rejects infeasible settings with $n>c$ and caps $n$ at $52$.  It then draws a random permutation $S_0=\mathrm{Shuffle}(\Sigma_n)$ so that every symbol appears at least once.  For $t\ge0$, sample $A_t\sim\mathrm{Unif}(\Sigma_n)$ independently and set
\begin{equation*}
S_{t+1}=S_tA_t.
\end{equation*}
The stopping time is
\begin{equation*}
\tau_c=\inf\{t\ge0:\kappa(S_t)\ge c\},
\end{equation*}
and the returned seed is $u=S_{\tau_c}$ with recorded complexity $\kappa(u)$.  Since the loop stops at the first threshold crossing, $\kappa(u)$ is close to $c$ but may exceed it by one or more emitted LZW codes, depending on the phrase inserted by the final sampled symbol.  In the reported experiments, $c$ denotes the retained measured complexity: after generation, the scripts keep only seeds satisfying $\kappa(u)=c$ for the requested target and discard threshold-crossing seeds with $\kappa(u)>c$.  This convention makes the table and figure indices exact while preserving the randomized construction of candidate seeds.

Algorithm~\ref{alg:lzw} specifies the compression transducer, Algorithm~\ref{alg:reduce} specifies exact-period reduction, and Algorithm~\ref{algo:lwz_generation} gives the executable generator.

\begin{minipage}[t]{0.45\textwidth}
\begin{algorithm}[H]
\caption{LZW Compression Transducer}
\label{alg:lzw}
\begin{algorithmic}[1]
\everymath{\color{AlgVarColor}}
\Require Word $v=v_1\cdots v_m\in\Sigma^*$
\Ensure Code sequence $C(v)$
\State $D\gets\{\sigma\mapsto i:\sigma\in\Sigma,\ i=0,\ldots,51\}$
\State $p\gets\epsilon$, $C\gets[\ ]$
\For{$i=1$ to $m$}
    \State $a\gets v_i$
    \If{$pa\in\mathrm{dom}(D)$}
        \State $p\gets pa$
    \Else
        \State append $D(p)$ to $C$
        \State $D(pa)\gets |D|$
        \State $p\gets a$
    \EndIf
\EndFor
\If{$p\ne\epsilon$}
    \State append $D(p)$ to $C$
\EndIf
\State \Return $C$
\end{algorithmic}
\end{algorithm}

\begin{algorithm}[H]
\caption{Exact-Period Reduction}
\label{alg:reduce}
\begin{algorithmic}[1]
\everymath{\color{AlgVarColor}}
\Require Word $u=u_1\cdots u_m\in\Sigma^*$
\Ensure Reduced word $\reduce(u)$
\For{$d=1$ to $\lfloor m/2\rfloor$}
    \If{$d\mid m$ and $u_i=u_{i+d}$ for all $1\le i\le m-d$}
        \State \Return $u_{1:d}$
    \EndIf
\EndFor
\State \Return $u$
\end{algorithmic}
\end{algorithm}
\end{minipage}
\hfill
\begin{minipage}[t]{0.45\textwidth}
\begin{algorithm}[H]
\caption{LZW-Controlled Seed Generator}
\label{algo:lwz_generation}
\begin{algorithmic}[1]
\everymath{\color{AlgVarColor}}
\Require Alphabet size $n$, target complexity $c$, random seed $r$
\Ensure Seed $u$, measured complexity $\kappa(u)$
\If{$n>52$}
    \State $n\gets52$
\EndIf
\If{$n=1$}
    \State \Return $(\mathrm{A},1)$
\EndIf
\If{$n>c$}
    \State \Return $(\mathrm{invalid},0)$
\EndIf
\State initialize pseudo-random generator with $r$
\State $\Sigma_n\gets\{\sigma_1,\ldots,\sigma_n\}$
\State $u\gets\mathrm{Shuffle}(\Sigma_n)$
\State $\kappa\gets\mathrm{LZW}(\reduce(u))$
\While{$\kappa<c$}
    \State draw $a\sim\mathrm{Unif}(\Sigma_n)$
    \State $u\gets ua$
    \State $\kappa\gets\mathrm{LZW}(\reduce(u))$
\EndWhile
\State \Return $(u,\kappa)$
\end{algorithmic}
\end{algorithm}
\end{minipage}

\subsection{Finite-context identifiability}

The LZW complexity controls the description length of a seed, but it does not by itself determine whether the next-symbol task is identifiable from a context of length $w$.  To make this distinction explicit, let $u\in\Sigma_n^m$ denote the reduced seed and interpret indices cyclically, i.e., $u_i=u_{1+((i-1)\bmod m)}$ for all $i\in\mathbb{Z}$.  For a context length $r\ge1$, define the set of cyclic contexts
\begin{equation*}
\mathcal{Z}_r(u)=\left\{u_{i-r+1:i}: i=1,\ldots,m\right\}\subseteq\Sigma_n^r
\end{equation*}
and the set of admissible next symbols after a context $z\in\mathcal{Z}_r(u)$ by
\begin{equation*}
\mathrm{Next}_r(z;u)=
\left\{u_{i+1}:u_{i-r+1:i}=z,\ i=1,\ldots,m\right\}.
\end{equation*}
The finite-context ambiguity rate is
\begin{equation}
\alpha_r(u)=\frac{1}{m}\sum_{i=1}^{m}
\mathbf{1}\left\{
\left|\mathrm{Next}_r(u_{i-r+1:i};u)\right|>1
\right\}.
\label{eq:ambiguity_rate}
\end{equation}
Thus $\alpha_r(u)$ measures the fraction of phases whose length-$r$ suffix does not uniquely determine the next symbol.  The minimal predictive order is
\begin{equation}
r^\star(u)=\min\{r\ge1:\alpha_r(u)=0\}.
\label{eq:predictive_order}
\end{equation}
For a primitive reduced seed, $r^\star(u)\le m$, but it can be much smaller than $m$ when short contexts already determine the symbolic dynamics.

\begin{proposition}[Finite-context identifiability]
For a periodic symbolic source generated by $u$, there exists a deterministic map $g_r:\Sigma_n^r\rightarrow\Sigma_n$ such that
\begin{equation*}
g_r(u_{i-r+1:i})=u_{i+1},\qquad i=1,\ldots,m,
\end{equation*}
if and only if $\alpha_r(u)=0$.  Consequently, $r^\star(u)$ is the smallest context length for which perfect one-step prediction is identifiable from finite contexts.
\end{proposition}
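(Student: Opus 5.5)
The plan is to prove both directions of the equivalence directly from the definitions of $\mathcal{Z}_r(u)$, $\mathrm{Next}_r(z;u)$ and $\alpha_r(u)$ in \eqref{eq:ambiguity_rate}, and then obtain the statement about $r^\star(u)$ from the definition \eqref{eq:predictive_order}. Throughout, indices are read cyclically as in the paper. The first observation is that every summand in \eqref{eq:ambiguity_rate} is a nonnegative indicator. Hence $\alpha_r(u)=0$ holds exactly when $|\mathrm{Next}_r(u_{i-r+1:i};u)|\le 1$ for every $i=1,\ldots,m$. Since the context $z_i:=u_{i-r+1:i}$ actually occurs at phase $i$, the set $\mathrm{Next}_r(z_i;u)$ contains $u_{i+1}$. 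So $\alpha_r(u)=0$ is equivalent to $\mathrm{Next}_r(z;u)$ being a singleton for every $z\in\mathcal{Z}_r(u)$.

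For the ``if'' direction, assume $\alpha_r(u)=0$. For $z\in\mathcal{Z}_r(u)$, define $g_r(z)$ to be the unique element of $\mathrm{Next}_r(z;u)$. For $z\in\Sigma_n^r\setminus\mathcal{Z}_r(u)$, let $g_r(z)$ be an arbitrary fixed symbol, say $\sigma_1$; these values are never queried. Then for each $i$, $g_r(z_i)$ is the unique element of $\mathrm{Next}_r(z_i;u)\ni u_{i+1}$, so $g_r(z_i)=u_{i+1}$.

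For the ``only if'' direction, I will argue by contraposition. Suppose $\alpha_r(u)>0$. Then some phase $i$ has $|\mathrm{Next}_r(z_i;u)|\ge 2$. By definition this gives phases $j,k$ with $u_{j-r+1:j}=u_{k-r+1:k}=z_i$ but $u_{j+1}\ne u_{k+1}$. Any map $g_r$ would then have to satisfy $g_r(z_i)=u_{j+1}$ and $g_r(z_i)=u_{k+1}$ simultaneously, which is impossible for a function. So no such deterministic $g_r$ exists.

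For the consequence, I will note that the existence of $g_r$ is exactly perfect one-step prediction from length-$r$ contexts over all phases. By the equivalence just proved, the set of $r$ admitting such a predictor is $\{r\ge1:\alpha_r(u)=0\}$, and its minimum is $r^\star(u)$ by definition. I also want to check that this set is nonempty, so that the minimum exists. The paper's remark that $r^\star(u)\le m$ for a primitive reduced seed covers this: with $r=m$, each context is a cyclic rotation of $u$, and primitivity means the $m$ rotations are pairwise distinct. Each context therefore occurs at exactly one phase and has a single successor.

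A further monotonicity remark is not needed for the minimality claim, but I would add it because it justifies reading $r^\star$ as a threshold. If $\alpha_r(u)=0$, then $\alpha_{r+1}(u)=0$, since longer contexts refine shorter ones: take $g_{r+1}(z)=g_r(\text{last } r \text{ symbols of } z)$. There is no real obstacle in this proof. The only points needing care are that the cyclic indexing makes $\mathrm{Next}_r$ nonempty at every occurring context, and that $g_r$ may be defined arbitrarily off $\mathcal{Z}_r(u)$.
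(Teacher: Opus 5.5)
Your proof is correct and follows the same route as the paper's sketch. For one direction you define $g_r$ as the unique admissible successor on $\mathcal{Z}_r(u)$ and extend it arbitrarily elsewhere; for the other you observe that a context with two distinct admissible successors cannot be matched by any function. Your extra checks are also valid: the set $\{r\ge1:\alpha_r(u)=0\}$ is nonempty because the $m$ rotations of a primitive reduced seed are pairwise distinct, and $\alpha_r(u)=0$ implies $\alpha_{r+1}(u)=0$; together they make the claim about $r^\star(u)$ slightly more complete than the paper's sketch.
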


\noindent\emph{Proof sketch.}  If $\alpha_r(u)=0$, every observed context $z$ has a singleton next-symbol set, so $g_r(z)$ can be defined as that unique element and extended arbitrarily outside $\mathcal{Z}_r(u)$.  Conversely, if some context has two distinct admissible next symbols, no deterministic function of that context alone can match both transitions.

For probabilistic models, the same obstruction appears as conditional entropy.  Let $\widehat{p}_r(z,a)$ be the empirical distribution over cyclic context--next-symbol pairs and let $\widehat{p}_r(a\mid z)$ be the induced conditional law.  The Bayes-optimal empirical cross-entropy at context length $r$ is
\begin{equation}
H_r(u)=
-\sum_{z\in\mathcal{Z}_r(u)}\widehat{p}_r(z)
\sum_{a\in\Sigma_n}\widehat{p}_r(a\mid z)
\log \widehat{p}_r(a\mid z),
\label{eq:context_entropy}
\end{equation}
which vanishes exactly when every context has a unique next symbol.  Hence $\kappa(u)$ measures compressibility, whereas $r^\star(u)$ and $\alpha_w(u)$ measure whether the supervised finite-context task is intrinsically learnable at the chosen window length.

\subsection{Finite-context prediction and rollout}

For each seed string $u$, define the length-$N$ periodic extension operator $\mathrm{Per}_N$ by
\begin{equation*}
x=\mathrm{Per}_N(u),\qquad
x_i=u_{1+((i-1)\bmod |u|)},\quad i=1,\ldots,N.
\end{equation*}
Let $w$ denote the context-window length and $k$ the forecast horizon.  The final target block
\begin{equation*}
y^\star=x_{N-k+1:N}\in\Sigma_n^k
\end{equation*}
is removed from supervised training and used only for sequence-level forecasting.  The observable prefix is $x^{\mathrm{obs}}=x_{1:N-k}$.

Let $\phi:\Sigma_n\rightarrow\{e_1,\ldots,e_n\}\subset\mathbb{R}^n$ be the one-hot encoder and extend it tokenwise to windows by
\begin{equation*}
\Phi_w(x_{i-w+1:i})=
\begin{bmatrix}
\phi(x_{i-w+1})^\top\\
\vdots\\
\phi(x_i)^\top
\end{bmatrix}\in\{0,1\}^{w\times n}.
\end{equation*}
The supervised examples are the ordered pairs
\begin{equation}
\mathcal{D}(x;w,k)=
\left\{
\left(X_i,y_i\right):
X_i=\Phi_w(x_{i-w+1:i}),\ y_i=x_{i+1},\ w\le i\le N-k-1
\right\}.
\label{eq:window_dataset}
\end{equation}
The split into $\mathcal{D}_{\mathrm{tr}}$, $\mathcal{D}_{\mathrm{val}}$, and $\mathcal{D}_{\mathrm{te}}$ is chronological, so later windows are never used to predict earlier windows.  This is the finite-context analogue of a one-step-ahead time-series protocol.

Every architecture is wrapped as the same scorer
\begin{equation*}
f_\theta:\{0,1\}^{w\times n}\rightarrow\mathbb{R}^{n},
\end{equation*}
where the $a$th logit scores the next symbol $\sigma_a$.  The conditional distribution is
\begin{equation}
P_\theta(\sigma_a\mid X)=
\frac{\exp(f_\theta(X)_a)}
{\sum_{b=1}^{n}\exp(f_\theta(X)_b)},
\qquad a=1,\ldots,n.
\label{eq:conditional_model}
\end{equation}
The architectures differ only in how $f_\theta$ is computed: gated recurrent models update hidden states sequentially; minimal recurrent models simplify the gating map while preserving a recurrent state; softmax and linear-attention Transformers compute contextual representations over the fixed window; Performer uses random-feature attention; and RWKV-style models combine recurrent time mixing with channel mixing.  No model is given access to $y^\star$ during fitting.

The fitted parameters minimize the empirical negative conditional log-likelihood
\begin{equation}
\hat{\theta}\in\arg\min_\theta
\left\{
-\frac{1}{|\mathcal{D}_{\mathrm{tr}}|}
\sum_{(X_i,y_i)\in\mathcal{D}_{\mathrm{tr}}}
\log P_\theta(y_i\mid X_i)
\right\},
\label{eq:nll_objective}
\end{equation}
optimized by mini-batch first-order methods.  Validation loss is used only for hyperparameter selection and early stopping, while $\mathcal{D}_{\mathrm{te}}$ provides teacher-forced one-step loss and accuracy.

After training, memorization is tested by closed-loop decoding.  Initialize
\begin{equation*}
c_0=x_{N-k-w+1:N-k}\in\Sigma_n^w.
\end{equation*}
For $j=1,\ldots,k$, form $X(c_{j-1})=\Phi_w(c_{j-1})$ and generate greedily
\begin{equation}
\hat{x}_{N-k+j}
=\arg\max_{\sigma_a\in\Sigma_n} P_{\hat{\theta}}(\sigma_a\mid X(c_{j-1})),
\qquad
c_j=(c_{j-1,2},\ldots,c_{j-1,w},\hat{x}_{N-k+j}).
\label{eq:rollout}
\end{equation}
Ties in the maximization are resolved by the fixed alphabet order $\sigma_1,\ldots,\sigma_n$.  The rollout forecast is $\hat{y}=\hat{x}_{N-k+1:N}$.  Teacher-forced metrics evaluate the local conditional model in Equation~\eqref{eq:conditional_model}; rollout metrics evaluate the stability of the induced dynamical system when predictions are recursively fed back as inputs.

\subsubsection{Teacher-forced error versus rollout stability}
Teacher-forced evaluation samples contexts from the true trajectory, while closed-loop rollout samples contexts from the model-induced trajectory.  When an early prediction is wrong, the next context may leave the empirical context distribution even if the one-step test error is small.  Let $E_j$ denote the event that the first $j$ rollout symbols are correct, with $E_0$ the sure event, and define the conditional rollout error
\begin{equation*}
\epsilon_j=\Pr(E_j^c\mid E_{j-1}),\qquad j=1,\ldots,k.
\end{equation*}
Then the exact-reconstruction probability satisfies
\begin{equation}
\Pr(\hat{y}=y^\star)=\Pr(E_k)=\prod_{j=1}^{k}(1-\epsilon_j),
\qquad
1-\Pr(\hat{y}=y^\star)\le \sum_{j=1}^{k}\epsilon_j.
\label{eq:rollout_amplification}
\end{equation}
This elementary chain-rule identity shows why a model can have high teacher-forced accuracy but poor long-horizon reconstruction.  Rollout metrics therefore test not only the learned local transition rule, but also the stability of that rule under its own generated contexts.

\subsection{A complexity-aware scaling view}

Classical neural scaling laws model error primarily as a function of trainable parameter count $P_\theta$, training-token budget $N_{\mathrm{tok}}$, and compute budget $B_{\mathrm{comp}}$ \citep{kaplan2020scaling,hoffmann2022training}.  In the present benchmark, each data point also has an algorithmic and finite-context profile
\begin{equation*}
\zeta(u;w,k)=\left(n,\kappa(u),r^\star(u),\alpha_w(u),w,k\right).
\end{equation*}
For a model family $\mathcal{M}$, the relevant response surface is therefore better viewed as
\begin{equation}
\mathcal{E}_{\mathcal{M}}
=
F_{\mathcal{M}}\left(P_\theta,N_{\mathrm{tok}},B_{\mathrm{comp}}\mid \zeta(u;w,k)\right),
\label{eq:complexity_aware_scaling}
\end{equation}
rather than as a function of $(P_\theta,N_{\mathrm{tok}},B_{\mathrm{comp}})$ alone.  Equation~\eqref{eq:complexity_aware_scaling} is not proposed as a fitted law in this paper; it is the organizing principle behind the benchmark.  It explains why model-size comparisons are ambiguous unless the sequence complexity, context length, and rollout horizon are reported together, and it motivates future sweeps in which $P_\theta$, $N_{\mathrm{tok}}$, $w$, $k$, $\kappa(u)$, and $r^\star(u)$ are varied jointly.

\section{Experiments}\label{sec:simulation}

\subsection{Experimental protocol}
We run a focused architecture comparison on the LZW-controlled symbolic benchmark.  The evaluated models are LSTM, GRU, minGRU, minLSTM, a softmax-attention Transformer, a linear-attention Transformer, Performer, and RWKV.  This set was chosen to compare three families of inductive bias under a common prediction protocol: explicit recurrent state, finite-window attention, and hybrid or linearized alternatives to quadratic attention.

For each alphabet size $n\in\{2,4,6,8\}$ and target LZW complexity $c\in\{10,30,50,70,90\}$, we generate a seed string, extend it periodically to length $N=3500$, construct windows of length $w=100$, and reserve the final $k=100$ symbols for closed-loop rollout.  We use two stochastic seeds and two independent training runs for each architecture--complexity--alphabet configuration, yielding $8\times4\times5\times2\times2=640$ trained models.  All models have two layers.  LSTM, GRU, minGRU, and minLSTM use recurrent width $128$; Transformer, LinearAttention, Performer, and RWKV use model dimension $256$.  We optimize with AdamW, learning rate $3\times10^{-4}$, weight decay $10^{-2}$, batch size $128$, a maximum of 200 epochs, early-stopping patience 10, and stopping loss threshold $0.05$.

All experiments were run with CUDA~13.0 and cuDNN~9.22.0 on a single NVIDIA A100 GPU.  During these runs, Performer was used with causal attention to match the autoregressive next-symbol prediction protocol; its implementation fell back to the non-CUDA autoregressive kernel, so the reported memory and runtime for Performer should be interpreted as conservative implementation-level costs rather than as hardware-optimal Performer costs.

Following scaling-law practice \citep{kaplan2020scaling,hoffmann2022training}, we do not treat parameter count alone as a sufficient explanation of quality.  Instead, each architecture is represented by the tuple
\begin{equation*}
\left(P_\theta,\ T_{\mathrm{train}},\ T_{\mathrm{epoch}},\ M_{\mathrm{peak}},
\mathcal{L}_{\mathrm{te}},\ A_{\mathrm{te}},\ \mathrm{DL},\ \mathrm{JW}\right),
\end{equation*}
where $P_\theta$ is the number of trainable parameters, $T_{\mathrm{train}}$ is total training time, $T_{\mathrm{epoch}}$ is time per epoch, $M_{\mathrm{peak}}$ is peak memory, $\mathcal{L}_{\mathrm{te}}$ and $A_{\mathrm{te}}$ are teacher-forced test loss and accuracy, and DL and JW are normalized rollout distances.  This fixed-budget design is not a compute-optimal scaling sweep; it is a controlled stress test for architecture-induced differences as symbolic description length varies.

To check whether the main ranking is driven by unequal parameter counts, we additionally run a high-complexity matched-size experiment.  In this auxiliary track, we fix $c=90$, $n\in\{4,8\}$, $N=3500$, $w=k=100$, two generated seeds, and two independent training runs.  For each alphabet size and each architecture in the set \{LSTM, GRU, Transformer, LinearAttention, Performer, RWKV\}, we select the closest configuration to a target size $P_0=0.2\,\mathrm{M}$ trainable parameters from the same width grids used by the experiment code.  This yields 48 additional fits under the same optimizer, stopping rule, CUDA version, and cuDNN version as the main study.  The matched-size track is a robustness check, not a full scaling-law sweep: it tests one hard symbolic regime while holding the training-token budget and evaluation protocol fixed.

\subsection{Accuracy under increasing symbolic complexity}
Figure~\ref{fig:complexity_sweep} shows that LZW complexity induces a clear degradation in both local prediction and closed-loop rollout.  Averaged across models, the normalized DL distance increases from $0.003$ at $c=10$ to $0.099$ at $c=70$, while test accuracy decreases from $0.999$ to $0.922$.  The non-monotonic improvement from $c=70$ to $c=90$ indicates that LZW description length is a useful but not total ordering of difficulty: alphabet size, seed periodicity, finite-context ambiguity $\alpha_w(u)$, and predictive order $r^\star(u)$ can vary across seeds with the same or larger compression threshold.

Table~\ref{tab:seed_diagnostics} reports these seed-level quantities for the generated strings.  The benchmark window $w=100$ gives $\alpha_{100}(u)=0$ for all seeds, so the supervised one-step task is identifiable from the chosen context length.  However, the seed length and the predictive order still increase with $c$, and the spread across alphabet sizes remains substantial.  In particular, the transition from $c=70$ to $c=90$ increases median seed length and median $r^\star(u)$ only mildly, which helps explain why the empirical curves need not be monotone in $c$ alone.

\input{tables/seed_diagnostics}

\begin{figure}[t]
    \centering
    \includegraphics[width=1\textwidth]{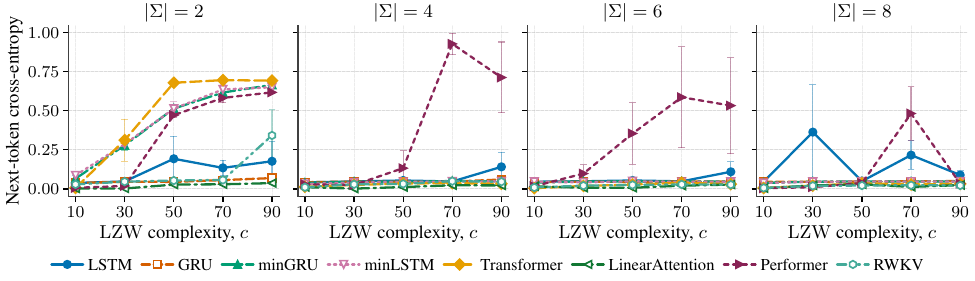}
    \includegraphics[width=1\textwidth]{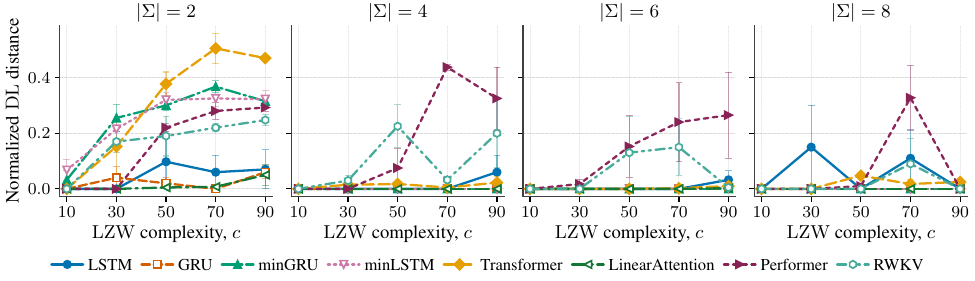}
    \caption{Teacher-forced loss and closed-loop rollout error as functions of LZW complexity.  Both panels use the same model identities and summarize the 640 training runs.  Lower values are better for both metrics.}
    \label{fig:complexity_sweep}
\end{figure}

The architecture ranking is not determined by model size.  GRU and LinearAttention are the strongest overall: their exact 100-step rollout rates are $0.925$ and $0.912$, respectively, and at the hardest reported complexity $c=90$ they obtain mean DL distances $0.015$ and $0.013$.  LSTM remains competitive with exact rollout rate $0.887$, but its higher test loss at $c=90$ suggests less stable optimization under the same budget.  The vanilla Transformer, despite having the largest parameter count in the experiment, has an overall exact rollout rate of $0.613$ and mean DL distance $0.132$ at $c=90$.  Thus, on this memorization benchmark, attention capacity without a matching inductive bias or compute allocation does not guarantee stable symbolic extrapolation.

\begin{figure}[ht!]
    \centering
    \includegraphics[width=0.62\textwidth]{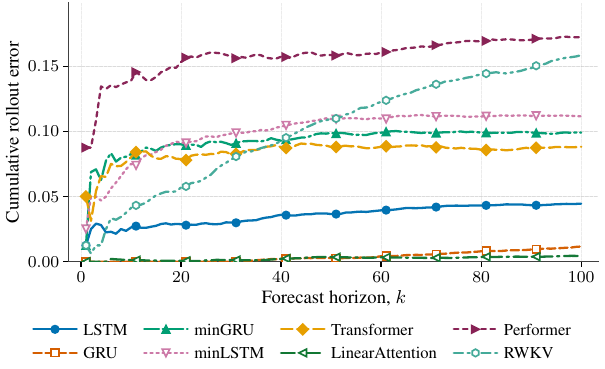}
    \caption{Closed-loop degradation over the rollout horizon.  Errors introduced early in the rollout become part of the subsequent context, so this plot measures stability of the learned symbolic dynamics rather than one-step classification alone.  Markers are shown every ten rollout steps to improve readability; the curves are computed over all horizons.}
    \label{fig:rollout_horizon}
\end{figure}

\subsection{Compute--accuracy trade-offs}
Table~\ref{tab:model_summary} and Figure~\ref{fig:efficiency} summarize the performance--cost frontier.  minGRU is the smallest and fastest model in wall-clock time, with median $0.067$M parameters and $6.30$s training time, but this speed comes with lower rollout fidelity than GRU or LinearAttention.  LinearAttention reaches near-GRU rollout quality with substantially fewer epochs and lower wall-clock time, although its memory footprint is larger in the present implementation.  RWKV converges quickly and uses less memory than attention baselines, but its rollout error remains higher than the best gated recurrent and linear-attention models.

\input{tables/model_summary}

\begin{figure}[ht!]
    \centering
    \includegraphics[width=0.495\textwidth]{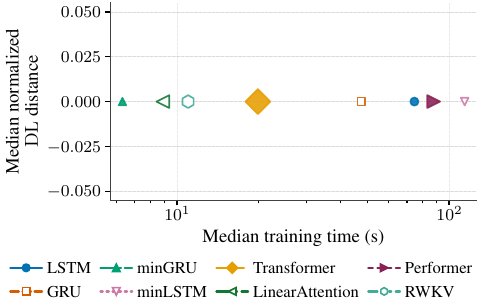}
    \includegraphics[width=0.495\textwidth]{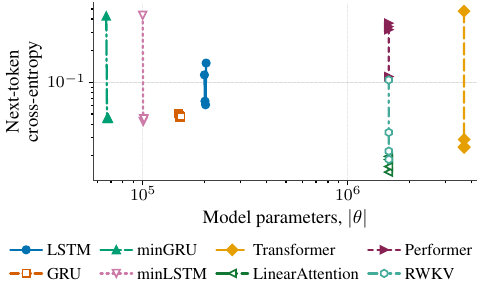}
    \includegraphics[width=0.495\textwidth]{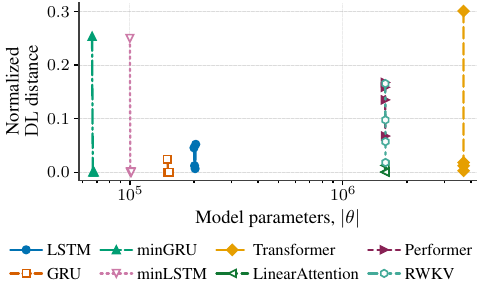}
    \caption{Efficiency views of the same benchmark results.  The Pareto plot compares rollout quality against measured compute, while the parameter plots show that increasing $P_\theta$ alone does not explain the observed ordering.}
    \label{fig:efficiency}
\end{figure}

\subsection{Matched-size high-complexity check}\label{sec:matched_size}
The main comparison intentionally preserves each architecture's native computational profile, which means that the parameter counts in Table~\ref{tab:model_summary} are not equal.  Table~\ref{tab:matched_size} reports the auxiliary matched-size check at $c=90$.  After reducing all models to roughly the same scale, the qualitative ordering does not reverse.  LSTM, GRU, LinearAttention, and Performer exactly reproduce every 100-symbol rollout in this small high-complexity track.  The vanilla Transformer misses one of eight rollouts, while RWKV remains accurate in teacher-forced prediction but is less stable in closed-loop generation.  LinearAttention is the fastest exact model in this setting, whereas Performer is exact but expensive because the run uses the non-CUDA autoregressive fallback.  Appendix Figure~\ref{fig:matched_size_appendix} shows the same comparison against the fixed-budget high-complexity subset for both rollout distance and teacher-forced loss.

\input{tables/matched_size}

These results support three conclusions.  First, controlled compressibility is a meaningful axis for symbolic sequence benchmarks, but it should be interpreted together with $r^\star(u)$ and $\alpha_w(u)$ because finite-window identifiability is the immediate learning constraint.  Second, recurrence remains a strong inductive bias for finite-context symbolic memorization, with GRU forming the most reliable low-memory baseline.  Third, efficient attention can be competitive when the architecture preserves the autoregressive structure of the task, as shown by LinearAttention.  The matched-size check sharpens this point: reducing the parameter-count gap removes one possible confound, but it does not by itself explain the remaining gap.  Larger softmax-attention models still require a more careful scaling of tokens, parameters, and optimization budget before their parameter advantage can be interpreted fairly.

\section{Limitations}\label{sec:limitations}

The comparison controls the data, optimizer settings, and stopping rules, but not the realized number of optimizer updates: early stopping gives different training durations across runs. Neither track is therefore a compute-matched or architecture-specifically tuned comparison. The high-complexity check in Section~\ref{sec:matched_size} approximately matches parameter counts around $P_0=0.2\,\mathrm{M}$ for six model families; it does not fix the number of training tokens per parameter and does not include minGRU or
minLSTM. In particular, the Transformer and Performer results should not be interpreted as their best attainable performance.

The benchmark studies models with at most a few million parameters on periodically extended symbolic sequences. The withheld suffix lies within the same period, so the task probes reproduction of a known pattern rather than discovery of an unknown rule, and these results should not be extrapolated to natural language data or large-scale training. For each main configuration two generated seeds are used together with two training runs for each seed. The matched-size check includes eight evaluations per architecture for two different alphabet sizes, but the runs that share a seed are not independent samples of symbolic structure. Exact reconstruction yields a binary result; the reported standard deviations are descriptive  rather than confidence intervals, and the small differences between the leading models are still uncertain.

For all tested seeds, $r^\star(u)\leq18$ and $\alpha_{100}(u)=0$. This means the experiments do not test for intrinsic finite-context ambiguity. The case where $w<r^\star(u)$, so $\alpha_w(u)>0$ and perfect one-step prediction is not possible, is still unexplored. Also, elapsed time and peak PyTorch CUDA usage depend on the specific implementation and the A100 GPU slice used. Performer was run without the optional custom causal CUDA kernel, so its measured costs should not be compared to optimized versions. State-space models like Mamba are not included in this comparison.

\section{Conclusion}\label{sec:conclusion}
We presented a symbolic-sequence  benchmark for studying the memorization ability of neural models. The benchmark controls input complexity via LZW compression and evaluates neural models through both teacher-forced prediction and closed-loop rollout.  The empirical study shows that the best model is not the largest model under a fixed budget: GRU and LinearAttention dominate the main trade-off between accuracy and cost, while minGRU is highly efficient but less reliable for exact long-horizon reconstruction.  The vanilla Transformer and Performer baselines do not automatically benefit from their larger parameter counts in this finite-data, finite-context setting, highlighting the need to report model size, training tokens, memory, and runtime together.

The empirical result of our extensive experiments is methodological.  Scaling laws already tell us that model size and data budget need to be considered together; our findings further show that task complexity, predictive order, finite-context ambiguity, and rollout horizon must likewise be stated clearly when studying symbolic sequence learning.  A full matched-scaling analysis would additionally sweep sequence length, training-token multiplier, and parameter quantity, and would revisit efficient attention and hybrid recurrent models with optimized kernels.  Such studies would turn the present benchmark from a fixed-budget comparison with a targeted robustness check into a compute-optimal scaling analysis for algorithmic memorization.

\bibliographystyle{unsrtnat}
\bibliography{references}

\appendix
\section{Appendix}

\subsection{Fixed-budget and matched-size comparison}
Figure~\ref{fig:matched_size_appendix} compares the main fixed-budget experiment with the matched-size robustness check on the common high-complexity subset $c=90$ and $n\in\{4,8\}$.  The two panels report the two complementary measurements used throughout the benchmark: closed-loop rollout distance and teacher-forced test loss.

\begin{figure}[t]
    \centering
    \includegraphics[width=\textwidth]{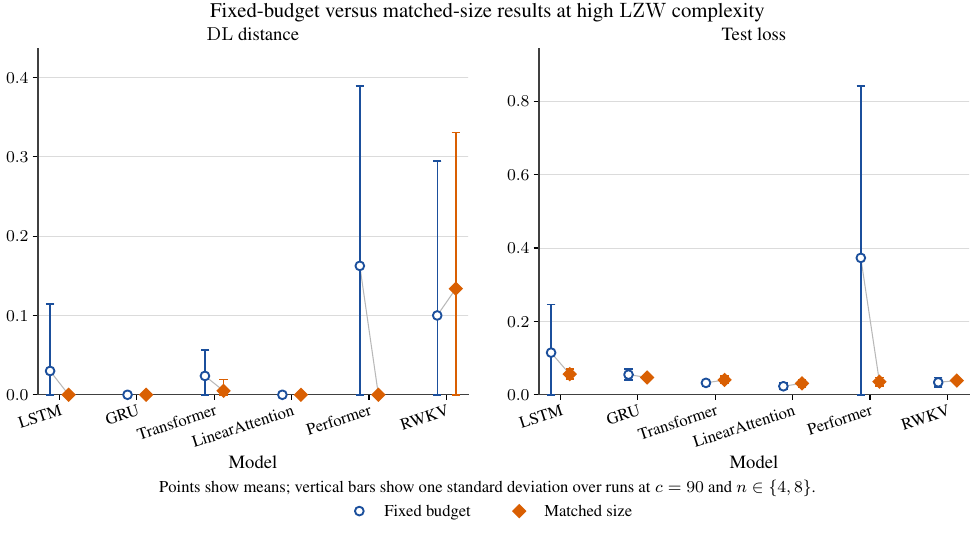}
    \caption{Fixed-budget and matched-size results on the common high-complexity subset $c=90$, $n\in\{4,8\}$.  Points show means and vertical bars show one standard deviation over two seeds and two runs for each alphabet size.  The left panel measures closed-loop symbolic stability through normalized DL distance; the right panel measures local teacher-forced prediction through test loss.}
    \label{fig:matched_size_appendix}
\end{figure}

\subsection{Distance Metrics}
Let $\hat{y},y^\star\in\Sigma^k$ denote the rollout forecast and the withheld target string.  Since $\Sigma^k$ is not naturally embedded in a Euclidean space, we evaluate sequence-level prediction using normalized string distances.  The reported DL metric is a normalized Damerau--Levenshtein distance, and the reported JW metric is a normalized Jaro--Winkler distance.  Smaller values are better for both metrics; exact reconstruction gives $\mathrm{DL}=\mathrm{JW}=0$.

The Damerau--Levenshtein edit distance $d_{\mathrm{edit}}(i,j)$ between prefixes $\hat{y}_{1:i}$ and $y^\star_{1:j}$ is defined recursively by
\begin{equation}
d_{\mathrm{edit}}(i,j) = \min \begin{cases}
0, & i=j=0, \\
d_{\mathrm{edit}}(i-1,j) + 1, & i>0, \\
d_{\mathrm{edit}}(i,j-1) + 1, & j>0, \\
d_{\mathrm{edit}}(i-1,j-1) + \mathbf{1}\{\hat{y}_i\ne y^\star_j\}, & i,j>0, \\
d_{\mathrm{edit}}(i-2,j-2) + 1, & i,j>1,\ \hat{y}_i=y^\star_{j-1},\
\hat{y}_{i-1}=y^\star_j .
\end{cases}
\end{equation}
For equal-length strings, the normalized score is
\begin{equation}
\mathrm{DL}(\hat{y},y^\star)=\frac{d_{\mathrm{edit}}(k,k)}{k}\in[0,1].
\end{equation}

For Jaro--Winkler, let $\mu$ be the number of matching symbols, where two symbols are considered matching if they are equal and their positions differ by at most
\begin{equation*}
\left\lfloor\frac{\max(|\hat{y}|,|y^\star|)}{2}\right\rfloor-1,
\end{equation*}
and let $t$ be half the number of transpositions among matched symbols.  The Jaro similarity is
\begin{equation}
\mathrm{Jaro}(\hat{y},y^\star)=
\begin{cases}
0, & \mu=0,\\
\frac{1}{3}\left(\frac{\mu}{|\hat{y}|}+\frac{\mu}{|y^\star|}
+\frac{\mu-t}{\mu}\right), & \mu>0.
\end{cases}
\end{equation}
Let $\ell\le 4$ be the common-prefix length and let $\rho=0.1$ be the standard prefix-scaling parameter.  The Jaro--Winkler similarity is
\begin{equation}
S_{\mathrm{JW}}(\hat{y},y^\star)=
\mathrm{Jaro}(\hat{y},y^\star)+\ell\rho\left(1-\mathrm{Jaro}(\hat{y},y^\star)\right).
\end{equation}
We report the corresponding distance
\begin{equation}
\mathrm{JW}(\hat{y},y^\star)=1-S_{\mathrm{JW}}(\hat{y},y^\star).
\end{equation}

\subsection{Model architectures}
All architectures instantiate the scorer $f_\theta$ in Equation~\eqref{eq:conditional_model} and map a one-hot context $X\in\{0,1\}^{w\times n}$ to next-symbol logits in $\mathbb{R}^n$.  The benchmark therefore controls the input--output contract while allowing each model family to use its native state representation.

\begin{itemize}
    \item \textbf{LSTM and GRU.}  The classical recurrent baselines process the window sequentially and predict from the final hidden state.  LSTM uses input, forget, and output gates with an explicit cell state \citep{Hochreiter1997}; GRU uses reset and update gates with a lighter hidden-state update \citep{Cho2014}.  These models have linear cost in the window length and expose a direct recurrent inductive bias for periodic symbolic dynamics.

    \item \textbf{minGRU and minLSTM.}  The minimal recurrent baselines follow the recent view that simplified gated recurrences can retain much of the sequence-modeling advantage of RNNs while improving parallelizability and implementation simplicity \citep{feng2024rnnsneeded}.  In our benchmark they use the same finite-context input and next-symbol objective as the classical recurrent models.

    \item \textbf{Transformer.}  The softmax-attention Transformer projects the context window to $d_m$ dimensions, adds positional information, applies multi-head self-attention and feed-forward blocks, and predicts from the final contextual representation \citep{Vaswani2017}.  Its attention cost is $O(w^2d_m)$ per layer, which is visible in the memory--accuracy analysis.

    \item \textbf{LinearAttention.}  LinearAttention replaces softmax attention by a kernelized attention map whose associativity permits $O(wd_m^2)$ or linear-in-length autoregressive computation, depending on implementation details \citep{katharopoulos2020transformers}.  It is included to test whether an attention model with recurrent-style computation can match gated recurrence on symbolic memorization.

    \item \textbf{Performer.}  Performer approximates softmax attention through positive random features, reducing the nominal attention cost from quadratic to linear in sequence length \citep{choromanski2021performer}.  The measured run used the available non-CUDA autoregressive fallback, so its empirical cost should be read as implementation-specific.

    \item \textbf{RWKV.}  RWKV combines Transformer-like channel mixing with an RNN-like time-mixing recurrence \citep{peng2023rwkv}.  It is included as a modern hybrid baseline that keeps constant-size recurrent state at inference while retaining several Transformer design elements.
\end{itemize}

\end{document}

%% file: tables/seed_diagnostics.tex
\begin{table}[t]
\centering
\caption{Seed diagnostics grouped by target LZW complexity.  Statistics summarize the eight generated seeds at each $c$ value, corresponding to four alphabet sizes and two stochastic seeds.  Length and $r^\star$ are reported as median with range; $\alpha_{100}^{\max}$ is the maximum ambiguity rate at the benchmark context length $w=100$.}
\label{tab:seed_diagnostics}
\begin{tabular}{rrrr}
\toprule
$c=\kappa(u)$ & $|u|$ median (range) & $r^\star(u)$ median (range) & $\alpha_{100}^{\max}$ \\
\midrule
10 & 11.0 (10--18) & 3.0 (2--9) & 0.000 \\
\shaderow 30 & 42.0 (33--82) & 5.0 (4--12) & 0.000 \\
50 & 83.5 (65--171) & 6.0 (4--14) & 0.000 \\
\shaderow 70 & 127.5 (97--263) & 6.5 (4--16) & 0.000 \\
90 & 174.0 (127--365) & 7.0 (4--18) & 0.000 \\
\bottomrule
\end{tabular}
\end{table}

%% file: tables/model_summary.tex
\begin{table}[t]
\centering
\caption{Model-level summary over all alphabet sizes and LZW complexities.  Exact denotes the fraction of runs for which the 100-symbol rollout exactly matches the withheld suffix.  Exact and DL@$c=90$ are reported as mean $\pm$ standard deviation over successful evaluations (runs that completed training); cost statistics are medians.}
\label{tab:model_summary}
\begin{tabular}{lrrrrrr}
\toprule
Model & Exact & DL@$c=90$ & Params (M) & Time (s) & Time/epoch (s) & Memory (MB) \\
\midrule
GRU & $0.925\pm0.265$ & $0.015\pm0.034$ & 0.152 & 47.59 & 1.11 & 187.30 \\
\shaderow LinearAttention & $0.912\pm0.284$ & $0.013\pm0.040$ & 1.581 & 8.88 & 1.27 & 2407.24 \\
LSTM & $0.887\pm0.318$ & $0.041\pm0.092$ & 0.202 & 74.56 & 1.14 & 213.13 \\
\shaderow minGRU & $0.787\pm0.412$ & $0.079\pm0.141$ & 0.067 & 6.30 & 0.11 & 196.04 \\
minLSTM & $0.762\pm0.428$ & $0.082\pm0.146$ & 0.101 & 114.09 & 2.78 & 130.48 \\
\shaderow RWKV & $0.613\pm0.490$ & $0.113\pm0.161$ & 1.585 & 10.99 & 1.68 & 656.22 \\
Transformer & $0.613\pm0.490$ & $0.132\pm0.203$ & 3.690 & 19.83 & 1.54 & 1313.01 \\
\shaderow Performer & $0.575\pm0.497$ & $0.221\pm0.217$ & 1.582 & 87.78 & 6.28 & 3782.05 \\
\bottomrule
\end{tabular}
\end{table}

%% file: tables/matched_size.tex
\begin{table}[t]
\centering
\caption{High-complexity matched-size robustness check at $c=90$.  Each row summarizes eight evaluations: two alphabet sizes $n\in\{4,8\}$, two generated seeds, and two training runs.  Widths are chosen to approximate a target size $P_0=0.2\,\mathrm{M}$ parameters.  Exact is the fraction of 100-step rollouts with zero DL distance.  DL, test loss, train time, and memory are reported as mean $\pm$ standard deviation.}
\label{tab:matched_size}
\small
\begin{tabular}{lrrrrrr}
\toprule
Model & Params (M) & Exact & DL & Test loss & Time (s) & Memory (MB) \\
\midrule
GRU & $0.236\pm0.001$ & 1.000 & $0.000\pm0.000$ & $0.047\pm0.003$ & $58.10\pm9.93$ & $230.59\pm3.12$ \\
\shaderow LSTM & $0.203\pm0.001$ & 1.000 & $0.000\pm0.000$ & $0.056\pm0.014$ & $101.04\pm11.98$ & $214.64\pm3.04$ \\
LinearAttention & $0.224\pm0.000$ & 1.000 & $0.000\pm0.000$ & $0.031\pm0.010$ & $6.93\pm1.35$ & $445.91\pm3.03$ \\
\shaderow Performer & $0.188\pm0.000$ & 1.000 & $0.000\pm0.000$ & $0.036\pm0.011$ & $105.68\pm29.92$ & $953.26\pm2.53$ \\
Transformer & $0.235\pm0.000$ & 0.875 & $0.005\pm0.014$ & $0.040\pm0.010$ & $12.08\pm3.23$ & $335.90\pm2.44$ \\
\shaderow RWKV & $0.226\pm0.000$ & 0.625 & $0.134\pm0.197$ & $0.039\pm0.006$ & $26.05\pm15.14$ & $253.61\pm3.43$ \\
\bottomrule
\end{tabular}
\end{table}